\newtheorem{corallary}{Corollary}%
\begin{document}

\title{Context-aware feature attribution through argumentation}

\author{Jinfeng ZHONG}
\affiliation{%
  \institution{Paris-Dauphine University, PSL Research University, CNRS UMR 7243, LAMSADE}
  \city{Paris}
  \country{France}
  \postcode{75016}}
\email{jinfeng.zhong@dauphine.eu}

\author{Elsa NEGRE}
\affiliation{%
  \institution{Paris-Dauphine University, PSL Research University, CNRS UMR 7243, LAMSADE}
  \city{Paris}
  \country{France}
  \postcode{75016}}
\email{elsa.negre@lamsade.dauphine.fr}

\renewcommand{\shortauthors}{Zhong and Negre}

\begin{abstract}
Feature attribution is a fundamental task in both machine learning and data analysis, which involves determining the contribution of individual features or variables to a model's output. This process helps identify the most important features for predicting an outcome. The history of feature attribution methods can be traced back to General Additive Models (GAMs), which extend linear regression models by incorporating non-linear relationships between dependent and independent variables. In recent years, gradient-based methods and surrogate models have been applied to unravel complex Artificial Intelligence (AI) systems, but these methods have limitations. GAMs tend to achieve lower accuracy, gradient-based methods can be difficult to interpret, and surrogate models often suffer from stability and fidelity issues. Furthermore, most existing methods do not consider users' contexts, which can significantly influence their preferences. To address these limitations and advance the current state-of-the-art, we define a novel feature attribution framework called \textbf{C}ontext-\textbf{A}ware \textbf{F}eature \textbf{A}ttribution \textbf{T}hrough \textbf{A}rgumentation (CA-FATA). Our framework harnesses the power of argumentation by treating each feature as an argument that can either support, attack or neutralize a prediction. Additionally, CA-FATA formulates feature attribution as an argumentation procedure, and each computation has explicit semantics, which makes it inherently interpretable. CA-FATA also easily integrates side information, such as users' contexts, resulting in more accurate predictions. Our experiments on two real-world datasets demonstrate that CA-FATA, or one of its variants, outperforms existing argumentation-based methods and achieves competitive performance compared to existing context-free and context-aware methods. CA-FATA also ensures the explainability of recommendations, making it a promising framework for practical applications.


\end{abstract}

\begin{CCSXML}
<ccs2012>
<concept>
<concept_id>10002951.10003317.10003347.10003350</concept_id>
<concept_desc>Information systems~Recommender systems</concept_desc>
<concept_significance>500</concept_significance>
</concept>
<concept>
<concept_id>10002950.10003648.10003662.10003665</concept_id>
<concept_desc>Mathematics of computing~Computing most probable explanation</concept_desc>
<concept_significance>500</concept_significance>
</concept>
</ccs2012>
\end{CCSXML}

\ccsdesc[500]{Information systems~Recommender systems}
\ccsdesc[500]{Mathematics of computing~Computing most probable explanation}

\keywords{Feature attribution, argumentation, explainable recommendation}

\maketitle

\section{Introduction}
 Feature attribution has been a long-standing practice {in the field of machine learning} to determine the contribution of individual features or variables to a model's overall output. This method has also been applied in recommender models to explain their behaviors \cite{afchar2020making, zhong20223, rago2018argumentation,nobrega2019towards,lundberg2017unified,ribeiro2016should,zhong2022shap}. The process of feature attribution helps identify the most important features for predicting an outcome and areas for model improvement. The origin of feature attribution methods can be traced back to general additive models (GAMs) \cite{hastie2017generalized}. Although GAMs are inherently interpretable, they often suffer from limited expressivity \cite{molnar2020interpretable}. In recent years, gradient-based methods \cite{li2021deep,selvaraju2017grad,ancona2019gradient} have been employed to disentangle complex Artificial Intelligence (AI) systems. These methods determine the importance of a feature $x$ in a function $f$ by calculating the derivative of $f$ with respect to the feature $x$. However, gradient-based methods may struggle with simple tasks that require understanding a moderately local region \cite{bilodeau2022impossibility}, and interpreting such gradients can be challenging for non-experts. To address the issue of gradient-based methods, surrogate models such as LIME \cite{ribeiro2016should} and SHAP \cite{lundberg2017unified} have emerged as two prominent post-hoc explanation methods. However, the limitations of these methods have been recognized. LIME inherently suffers from stability issues \cite{visani2022statistical}, and the attribution of feature importance through mathematically formalizable properties (i.e., local accuracy, missingness, and consistency \cite{lundberg2017unified}) may not always align with users' expectations for explanations \cite{kumar2020problems}. 

In recent years, argumentation-based methods have gained significant attention in the field of eXplainable Artificial Intelligence (XAI) \cite{vassiliades2021argumentation, vcyras2021argumentative, zeng2018context}. This is due to the clear and understandable means of representing relations, such as support and attack, offered by Argumentation Frameworks (AFs), which provide explicit meanings to the computation. Under AFs, decision-making processes can be visually depicted, and optimal decisions can be explained using well-defined properties \cite{vassiliades2021argumentation}. Weighted arguments are used to represent the strength of arguments and the dialectical relations between them, such as support and attack. The strength function of arguments can be carefully designed to satisfy the generalized concepts of \emph{weak balance} \cite{rago2018argumentation} and \emph{weak monotonicity} \cite{rago2021argumentative}, which characterize how arguments influence the decision-making process (we will revisit the two notions in Section~\ref{sec:preliminaries}). These methods can be used to explain decisions made through a graphical representation of the decision-making process. Context-Aware Recommender System (CARS) \cite{adomavicius2011context} is an important research topic in recommender systems. CARS can model users' preferences under different contextual situations with finer granularity and generate more personalized recommendations adapted to users' contexts. We believe that context is also crucial in argumentation frameworks, as certain arguments that are considered ``good'' in one context may become less accurate in another context. Therefore, {it is important to leverage contexts when applying argumentation \cite{teze2018argumentative,garcia2004defeasible}}.

In light of the interpretability challenges associated with traditional feature attribution methods, it is reasonable to explore new avenues for improving the explainability of machine learning models. One such approach is to leverage argumentation techniques to attribute feature importance. In this paper, we introduce a novel framework for feature attribution called \textbf{C}ontext-\textbf{A}ware \textbf{F}eature \textbf{A}ttribution \textbf{T}hrough \textbf{A}rgumentation (CA-FATA). The framework employs argumentation to attribute importance to each feature, considering them as arguments that can either support, attack or neutralize a prediction. CA-FATA formulates feature attribution under argumentation frameworks, providing each computation with explicit semantics, thereby ensuring interpretability. Additionally, the framework allows for the integration of side information, such as user contexts, resulting in more accurate predictions. Our experiments on two real-world datasets demonstrate that CA-FATA outperforms existing argumentation-based methods and achieves competitive performance compared to existing context-free and context-aware methods. CA-FATA also ensures the explainability of recommendations, making it a promising framework for practical applications. 

\section{Related work}\label{sec:related work}

\subsection{Feature attribution}
There are numerous methods available to identify the most important features {in the field of machine learning}. Chronologically, feature attribution can be traced back to GAMs \cite{hastie2017generalized}. Recently, gradient-based methods \cite{li2021deep,selvaraju2017grad,smilkov2017smoothgrad} have gained popularity due to the widespread use of deep neural networks. Developing surrogate models \cite{lundberg2017unified, ribeiro2016should} to approximate black-box models by human-interpretable models is another approach that has been widely adopted.

\textbf{GAMs} extend linear regression models by incorporating non-linear relationships between dependent and independent variables and are typically in the following form: 

\begin{equation}\label{equa:gams}
    g(y) = f_1(x_1) + f_2(x_2) + f_2(x_2) + \dots + f_i(x_i) + \dots + f_n(x_n) 
\end{equation}
where $x_1,x_2,\dots,x_i, \dots, x_n$ are the features applied to make predictions, $g$ is called the link function and $f_i$ is the shape function \cite{lou2012intelligible}. GAMs have a built-in interpretability feature that allows tracing the contribution of each feature, thereby facilitating the identification of the most important features. Nevertheless, the expressivity of GAMs is limited compared {with more complex models such as ensemble models \cite{molnar2020interpretable}.}

\textbf{Gradient-based methods} have become a popular means of identifying important features in predictive functions $f(x_1, x_2 \dots x_i \dots x_n)$ by computing the derivative of $f$ with respect to the feature $x_i$: $\frac{\partial f}{\partial x_i}$ \cite{li2021deep,selvaraju2017grad,ancona2019gradient}. This is based on the intuition that the sensitivity of the model output to changes in the input, as quantified by $\frac{\partial f}{\partial x_i}$, reveals the importance of the feature. However, as noted by Ancona et al. \cite{ancona2019gradient}, gradient-based methods can be sensitive to noisy gradients, and may struggle with simple tasks that require understanding a moderately local region \cite{bilodeau2022impossibility}. Additionally, interpreting these gradients can be challenging for non-experts.

\textbf{Surrogate models} aim to approximate the behavior of black-box models with more interpretable ones, without revealing the internal workings of black-box models. Among the many surrogate models available, LIME \cite{ribeiro2016should} and SHAP \cite{lundberg2017unified} have emerged as two prominent methods. LIME and SHAP can both be regarded as additive feature attribution methods and can be unified as shown in Equation~\ref{equa:gams} \cite{lundberg2017unified}. However, since LIME and SHAP are post-hoc methods, it has been argued that the explanations may not accurately reflect the reasoning of black-box models, and may even be misleading \cite{rudin2019stop}. Additionally, studies have demonstrated that LIME and SHAP can be easily deceived into producing innocuous explanations that do not reveal underlying biases \cite{slack2020fooling}.

{In view of the limits of existing feature attribution methods,} we define a novel feature attribution framework that leverages argumentation. Specifically, the framework maps the prediction process into argumentation procedures, treating features as arguments and incorporating a strength function that reflects how each argument (feature) influences the prediction.
 
\subsection{Argumentation frameworks}
 Among the existing argumentation frameworks, three types can be identified: Abstract Argumentation Framework (AAF) \cite{dung1995acceptability}, Bipolar Argumentation Framework (BAF) \cite{cayrol2005acceptability}, Tripolar Argumentation Framework (TAF) \cite{gabbay2016logical}. An AAF is composed of a set of pairs $<\mathcal{A},\mathcal{R}^->$, where $\mathcal{R}^-$ denotes a set of attack relations between arguments such that $ {\forall}\mathbf{a}_1, \mathbf{a}_2 \in \mathcal{A}$, $(\mathbf{a}_1, \mathbf{a}_2) \in \mathcal{R}^-$ denotes that argument $\mathbf{a}_1$ attacks argument $\mathbf{a}_2$. The relation ``attacks'' indicates a contradiction between two arguments. For example, considering $\mathbf{a}_1$ ``This user does not like the feature of this item (one actor of a movie)'' and $\mathbf{a}_2$ ``This item can be recommended to this user''.  It is evident that $\mathbf{a}_1$ attacks $\mathbf{a}_2$. BAFs contain a set of triplets, $<\mathcal{A},\mathcal{R}^-, \mathcal{R}^+>$, $\mathcal{R}^-$ represents attack as in AAF. Similarly, $\mathcal{R}^+$ denotes a set of support relations between arguments such that $ {\forall}\mathbf{a}_1, \mathbf{a}_2 \in \mathcal{A}$, $(\mathbf{a}_1, \mathbf{a}_2) \in \mathcal{R}^+$ denotes that argument $\mathbf{a}_1$ supports argument $\mathbf{a}_2$. TAFs contain a set of quadruples: $<\mathcal{A},\mathcal{R}^-, \mathcal{R}^+, \mathcal{R}^0>$, where $\mathcal{R}^-$ represents the attack relations, $\mathcal{R}^+$ denotes the support relations and $\mathcal{R}^0$ means neutralizing relations. In this work, we have chosen to adopt TAFs that comprise three types of relations between arguments: attack, support, and neutralizing. {This is because features of items may support, attack, or neutralize the recommendation of items, indicating users' preferences towards features.} As we will discuss later, the strength of the arguments in the TAFs presented in this paper is based on the users' ratings towards features. Consequently, the TAFs in this paper can be viewed as instances of Weighted Argumentation Frameworks as defined in \cite{bistarelli2022labelling} where the weight denotes the strength of arguments.
Argumentation-based methods have gained attention for building decision-support tools due to their natural explainability. In the field of recommender systems, such methods have received increasing attention. For example, Briguez et al. \citep{briguez2012towards,briguez2014argument} proposed a music recommendation system based on Defeasible Logic Programming (DeLP) rules \citep{garcia2004defeasible}, which can handle incomplete and contradictory information in RS. Toniolo et al. \citep{toniolo2020dialogue} proposed a decision-making system that provides medical treatments to patients with multiple chronic health problems. To justify the recommended treatments during argumentation dialogues, they employed \emph{Satisfiability Modulo Theories} \cite{barrett2018satisfiability}.

Our research is closely related to two previous works: the \emph{Aspect-Item framework (A-I)} introduced in \cite{rago2018argumentation,rago2021argumentative} and the \emph{Attribute-Aware Argumentative Recommender ($A^3R$)} proposed in \cite{zhong20223}. Both A-I and $A^3R$ use argumentation to predict users' ratings towards items, treating items and features as arguments that may attack or support each other to explain recommendations in an argumentative manner. However, these methods do not consider the influence of user contexts.

Several attempts have been made to integrate CARSs with argumentation-based methods. For example, in \cite{teze2018argumentative}, the authors propose to generate context-aware recommendations using DeLP \cite{garcia2004defeasible}, where contexts are presented as a set of conditions to derive concluding expressions. However, only a prototype in a mobile robotic environment \cite{luis2008decision} was described to show the system's workings. Similarly, \cite{zeng2018context} proposes a context-based decision-making framework with argumentation, where DeLP is also applied. The authors model contexts as rules in an assumption-based argumentation framework (an extension of AF). In this framework, contexts moderate the reachability of goals from decisions and therefore influence the decision-making process. The authors further illustrate the proposed framework through an example of choosing the appropriate treatment for a patient threatened by blood clotting. In previous research, authors such as Teze et al. \cite{teze2018argumentative} and Zeng et al. \cite{zeng2018context} have proposed decision-making frameworks that utilize argumentation theory to adapt to contextual factors. However, these approaches have not addressed the problem of rating prediction, which is a classic task in recommender systems. Other works, such as Budan et al. \cite{budan2020proximity}, have considered the role of context in argumentation by developing methods to compute the similarity of arguments in different contexts. {Contrary to \cite{budan2020proximity}}, our paper focuses on examining how contextual factors can influence decision-making processes in an argumentative way.

\subsection{Context-aware recommender systems (CARSs)}Context is a notion that researchers from various domains have extensively studied, but a consensus on its definition has not been reached. In this paper, we adopt the definition proposed by Dey et al. \cite{abowd1999towards}: ``Context is any information that can be used to characterize the situation of an entity.'' A contextual situation, denoted as $cs$, is typically composed of several contextual conditions. For instance, the situation of a user can be characterized by various contextual factors such as ``Companion'', ``Day of the week'', and ``Location''. Each contextual factor consists of several possible values, which are referred to as contextual conditions. For instance, the contextual factor "Companion" may include possible values such as ``Companion'' include: ``With family'', ``With friends'', and so on.  Therefore, a contextual situation is a combination of different contextual conditions. To illustrate, a user's situation could be defined as $(Sunday, at\_home, with\_family)$. In this paper, we consider \emph{context} and \emph{contextual situation} to be equivalent terms. CARSs use contextual information to model users' preferences with greater precision, allowing them to provide more personalized recommendations that are tailored to their contextual situations. Various context-aware models have been proposed, including CAMF-C \cite{baltrunas2011matrix} and FM \cite{rendle2010factorization}, which are factorization-based methods. Recently, neural networks have also been applied to capture users' preferences under different contextual situations \cite{unger2020context}. However, explaining recommendations generated by these methods is challenging: {the semantics of the learned latent factors are usually unclear,} which is why we turn to argumentation as an approach that is both intuitive and explainable.
 \section{Preliminaries and Problem formulation}\label{sec:preliminaries}

 \begin{table}[t]
\centering
\caption{The key notions in this paper}
\begin{tabular}{@{}ll@{}}
\toprule
Notion                    & Description                                  \\ \midrule
$cf$ & A contextual factor                  \\
$C$&The contextual factors to characterize the situation of users\\
$cd$&A contextual condition\\
$\pi_u^{cf}$&The importance of contextual factor $cf$ to user $u$\\
$cs = (cd_1, cd_2, cd_3, \dots)$& A contextual situation\\
$\pi_u^t$&  The importance of type $t$ to user $u$  \\
$t_i$ & The feature types of item $i$
\\
$at_i$&The features of item $i$\\
 $at_i^{t}$& The features of item $i$ of type $t$ (e.g. A movie may have several genres)\\
 $\mathcal{P}_u^{at}$ & The predicted rating of user $u$ towards feature $at$ \\
 $\mathcal{R^+}, \mathcal{R}^-, \mathcal{R}^0$ & Support, attack and neutral relations among arguments                  \\
$\mathcal{R}^-$(a,b) & Argument $a$ attacks argument $b$, the same for ``+'' and ``0''  \\
$\mathcal{A}$             & A set of arguments                           \\
$rec^i$ &An argument stating ``the item can be recommended to the target user''\\
$\mathcal{R}^+(rec^i) = \{at|(at,i) \in \mathcal{R}^+\}$&The arguments (features) that support $rec^i$\\
$\mathcal{R}^-(rec^i) = \{at|(at,i) \in \mathcal{R}^+\}$&The arguments (features) that attack $rec^i$\\
$\mathcal{R}^0(rec^i) = \{at|(at,i) \in \mathcal{R}^+\}$&The arguments (features) that neutralize $rec^i$\\
$<\mathcal{A}, \mathcal{R}^{-}, \mathcal{R}^{+}, \mathcal{R}^{0}>$& A tripolar argumentation framework\\
$\sigma(a)$               & The strength of argument $a$                 \\
 \bottomrule
\end{tabular}
\label{tab:notions}
\end{table}

To standardize the terminology used in the rest of this paper, we present the key notions in Table~\ref{tab:notions}. Note that the bold font is used to represent the vectors that denote $cd$, $cs$, $cf$, $u$, $t$, and $at$.

\subsection{Weak balance and weak monotonicity}
The concept of \emph{weak balance}, as defined in \citep{rago2018argumentation}, is a generalization of the notion of ``strict balance'' proposed in \citep{baroni2018many}. Similarly, the idea of \emph{weak monotonicity}, defined in \citep{baroni2019fine}, is a generalization of the concept of ``strict monotonicity'' as defined in \citep{rago2021argumentative}. The two properties allow for deriving intuitive explanations in an argumentative way. Essentially, the concept of \emph{Weak balance} concerns the impact of an argument on its affectees when the argument is the sole factor affecting them, while the idea of \emph{Weak monotonicity} focuses on how the potency of an argument changes when one of its affecters is silenced relative to the neutral point.

\textbf{Weak balance:} One potential approach to analyze the effects of one argument on another is to isolate the affecter and examine its impact on the affectee. The intuition behind this approach is that if the affecter increases the strength of the affectee, then it supports the affectee. This idea has been formalized as \emph{weak balance} in \citep{rago2018argumentation}. According to \emph{weak balance}, relations under argumentation frameworks such as attacks (or supports, neutralizes) can be characterized as connections among affecters and affectees in the following way: if one affecter is isolated as the single argument that affects the affectee, then the former reduces (or increases, does not change) the latter's predicted rating with respect to the neutral point. In other words, relations under argumentation frameworks can be analyzed by examining the connections between affecters and affectees. Formally, \emph{weak balance} can be defined as follows:
\begin{definition} 
Given a TAF $<\mathcal{A}, \mathcal{R}^{-}, \mathcal{R}^{+}, \mathcal{R}^{0}>$, $\sigma(at)$ satisfies the property of \emph{weak balance} if, for any $a,b \in \mathcal{A}:$
\begin{center}
$\bullet$ if  $\mathcal{R}^+(rec^i) = \{at\}$, $\mathcal{R}^-(rec^i) = \emptyset$ and $\mathcal{R}^0(rec^i) = \emptyset$ then $\sigma(rec^i) > 0$;
\\
$\bullet$ if  $\mathcal{R}^-(rec^i) = \{at\}$, $\mathcal{R}^+(rec^i) = \emptyset$ and $\mathcal{R}^0(rec^i) = \emptyset$ then $\sigma(rec^i) < 0$;
\\
$\bullet$ if  $\mathcal{R}^0(rec^i) = \{at\}$, $\mathcal{R}^+(rec^i) = \emptyset$ and $\mathcal{R}^-(rec^i) = \emptyset$ then $\sigma(rec^i) = 0$.
\end{center}
\end{definition}

\textbf{Weak monotonicity :} To study the effects of one argument on another, another alternative approach is to mute the affecter, setting its strength to 0, and observe how the strength of the affectee changes. This idea is intuitive: if the affecter supports the affectee, then muting the affecter would decrease the strength of the affectee; if the affecter attacks the affectee, then muting the affecter would increase the strength of the affectee; if the affecter neutralizes the affectee, then muting the affecter would not change the strength of the affectee. This intuition has been formalized as \emph{weak monotonicity} in \citep{baroni2019fine}. This property is formulated for two TAFs: from $<\mathcal{A}, \mathcal{R}^{-}, \mathcal{R}^{+}, \mathcal{R}^{0}>$ to $<\mathcal{A}^{\prime}, \mathcal{R}^{{-}^{\prime}}, \mathcal{R}^{{+}^{\prime}}, \mathcal{R}^{{0}^{\prime}}>$ after modifying certain arguments (e.g. muting certain features). Formally, \emph{weak monotonicity} is defined as follows:

\begin{definition} 
Given two TAFs $<\mathcal{A}, \mathcal{R}^{-}, \mathcal{R}^{+}, \mathcal{R}^{0}>$ and $<\mathcal{A}^{\prime}, \mathcal{R}^{{-}^{\prime}}, \mathcal{R}^{{+}^{\prime}}, \mathcal{R}^{{0}^{\prime}}>$, $(x,y) \in (\mathcal{R}^{-} \cup \mathcal{R}^{+} \cup \mathcal{R}^{0}) \cap (\mathcal{R}^{{-}^{\prime}}\cup\mathcal{R}^{{+}^{\prime}}\cup\mathcal{R}^{{0}^{\prime}})$. $\sigma$ satisfies \emph{weak monotonicity} at $(x,y)$ if, as long as $\sigma(x) = 0$ in $<\mathcal{A}^{\prime}, \mathcal{R}^{{-}^{\prime}}, \mathcal{R}^{{+}^{\prime}}, \mathcal{R}^{{0}^{\prime}}>$ and $\forall z \in [(\mathcal{R}^{-}(y) \cup \mathcal{R}^{+}(y)  \cup \mathcal{R}^{0})(y)  \cap (\mathcal{R}^{{-}^{\prime}}(y) \cup\mathcal{R}^{{+}^{\prime}}(y) \cup\mathcal{R}^{{0}^{\prime}})(y)]\backslash\{x\}$ and for $\sigma(z) = s$ in  $<\mathcal{A}, \mathcal{R}^{-}, \mathcal{R}^{+}, \mathcal{R}^{0}>$ and $\sigma(z) = s^{\prime}$ in $<\mathcal{A}^{\prime}, \mathcal{R}^{{-}^{\prime}}, \mathcal{R}^{{+}^{\prime}}, \mathcal{R}^{{0}^{\prime}}>$, $s = s^{\prime}$. Then the following holds for $\sigma(y) = v$ in  $<\mathcal{A}, \mathcal{R}^{-}, \mathcal{R}^{+}, \mathcal{R}^{0}>$ and $\sigma(y) = v^{\prime}$ in $<\mathcal{A}^{\prime}, \mathcal{R}^{{-}^{\prime}}, \mathcal{R}^{{+}^{\prime}}, \mathcal{R}^{{0}^{\prime}}>$
\begin{center}
$\bullet$ if $x \in \mathcal{R}^{-}(y) \cap \mathcal{R}^{{-}^{\prime}}(y)$, then $v^{\prime} > v $;
\\
$\bullet$ if $x \in \mathcal{R}^{+}(y) \cap \mathcal{R}^{{+}^{\prime}}(y)$, then $v^{\prime} < v $;
\\
$\bullet$ if $x \in \mathcal{R}^{0}(y) \cap \mathcal{R}^{{0}^{\prime}}(y)$, then $v^{\prime} = v $.
\end{center}
\end{definition}
\begin{figure}[t]
    \centering
      \begin{tikzpicture}
\node[circle,draw,text = green] (a) at (7,1) {at1};
\node[circle,draw,text = red] (b) at (10,-2) {at2};
\node[circle,draw] (c) at (13,1) {at3};
\node[circle,draw,text=black,fill=lightgray] (d) at (10,0) {item};
\draw[green,->,thick] (a) to [in = 120, out  = -120] (d);
\node[text = green] at (8.5,0.90) {+0.52};
\draw[red,->,thick] (b) -- (d);
\node[text = red] at (10.4,-1) {-0.11};
\node at (11.5,0.7) {0};
\draw[->,thick] (c) to [in = 90, out  = -120] (d);
\end{tikzpicture} 
   \caption{A graphical representation of an argumentation procedure in a recommendation scenario. Each node represents an argument, The value on the arc denotes the strength and polarity of the argument, ``+'' denotes supports, ``-'' denotes attacks, and ``0'' denotes neutralizes.}
    \label{fig:explanation}
\end{figure}
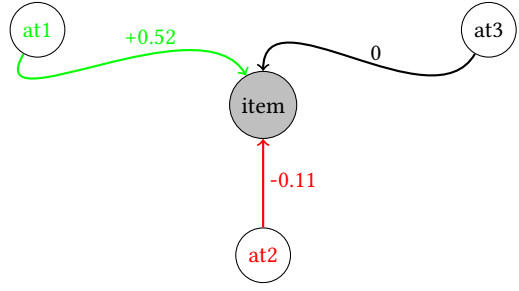

\subsection{Problem formulation}
 Considering the following recommendation scenario: for a target user $u$ under a contextual situation $cs$. The features of an item can have positive, negative, or neutral impacts on the prediction. In this regard, the features can be viewed as arguments, along with another argument stating ``the item can be recommended to the target user'', $rec^i$. Such a scenario can be seamlessly integrated into a TAF \cite{gabbay2016logical}. For a user-item interaction $(u,i)$ under $cs$, the TAF tailored to this interaction is a quadruple: $<\mathcal{A}, \mathcal{R}^{-}, \mathcal{R}^{+}, \mathcal{R}^{0}>$, where $\mathcal{A}$ contains $at_i$ and $rec^i$. To illustrate this idea, consider the following example where $\mathcal{R}^{+}(at,rec^i)$ denotes that the feature $at$ has a positive effect on the recommendation of item $i$. Figure~\ref{fig:explanation} visually demonstrates this idea. Hence, the objectives are as follows: (i) to predict the rating assigned by a target user $u$ to a particular item $i$ in a given contextual situation $cs$; (ii) to determine the contribution of each item feature to a prediction under this contextual situation $cs$; (iii) to assess the polarity of each argument (feature) in the TAF; (iv) to design the strength function of arguments that comply with the conditions of \emph{weak balance} and \emph{weak monotonicity} as defined above.

\section{Our framework: CA-FATA}\label{sec:our_proposition}

The core ideas of CA-FATA are as follows: (i) Users' ratings towards items depend on the features of items, which is also a fundamental concept of content-based RSs; (ii) The importance of feature types varies across users. For instance, when choosing a movie, some users may have specific preferences for certain actors while others may prefer movies of particular directors. In the former case, users pay more attention to the feature type \textbf{\emph{stars\_in}}, while in the latter case, they accord more importance to the feature type \textbf{\emph{directs}}; (iii) CA-FATA further extends this idea by noting that users' preferences also vary across contexts \cite{adomavicius2011context}. In the context of movie recommendations, some users are more likely to be influenced by \textbf{\emph{companion}} (e.g., with a lover or children) while others may be more likely to be influenced by \textbf{\emph{location}} (e.g., home or public place); (iv) We map item-features graphs into TAFs, where features of items are regarded as arguments that may attack, support, or neutralize the recommendation of the item; (v) The dialectical relations are learned in a data-driven way and supervised by users' past interactions with items (aka. ratings towards items).


Figure~\ref{fig:framework} depicts the structure of CA-FATA, which consists of three steps: (i) Computing the representation of target users under the target contextual situation to ensure that users' preferences are adapted to contexts and that the dialectical relations of arguments are also context-aware; (ii) Computing users' ratings towards features of items under the given contextual situation, which are then used to determine the dialectical relations; (iii) Aggregating the ratings obtained in the previous step to generate users' ratings towards items.

\textbf{Computing user representation (Step 1):} As illustrated in Figure~\ref{fig:framework}, the ultimate representation of a user is determined by the user's contextual situation. In this step, our goal is to compute the representation of users that is adapted to the target contextual situation. To achieve this, we begin by computing the score of the contextual factor $cf$ for a user $u$:

\begin{equation}\label{equa:score_context}
    \beta_u^{cf} = g(\textbf{u},\textbf{cf})
\end{equation}

where $g$ is the inner product\footnote[1]{Note that other functions may be adopted, but for simplicity, we adopt the inner product.}. After calculating the score $\beta_u^{cf}$ of all contextual factors to this user, {we normalize the score using Equation~\ref{equa:normalize} as described in \cite{velivckovic2017graph}} to obtain the importance of each contextual factor. The importance computed here is similar to the relevance weight in \cite{budan2020proximity,budan2020similarity}. However, unlike in these two works, where the relevance weight of the context is set empirically, in our work, the importance of the context is learned in a data-driven way. Intuitively, $\pi_u^{cf}$ characterizes the extent to which user $u$ wants to take contextual factor $cf$ into account.

\begin{equation}\label{equa:normalize}
   \pi_u^{cf} = \frac{exp(LeakyReLU(\beta_u^{cf}))}{\sum_{cf \in C}exp(LeakyReLU(\beta_u^{cf}))}
\end{equation}

\begin{figure}[t]
\centering
\includegraphics[scale=0.20]{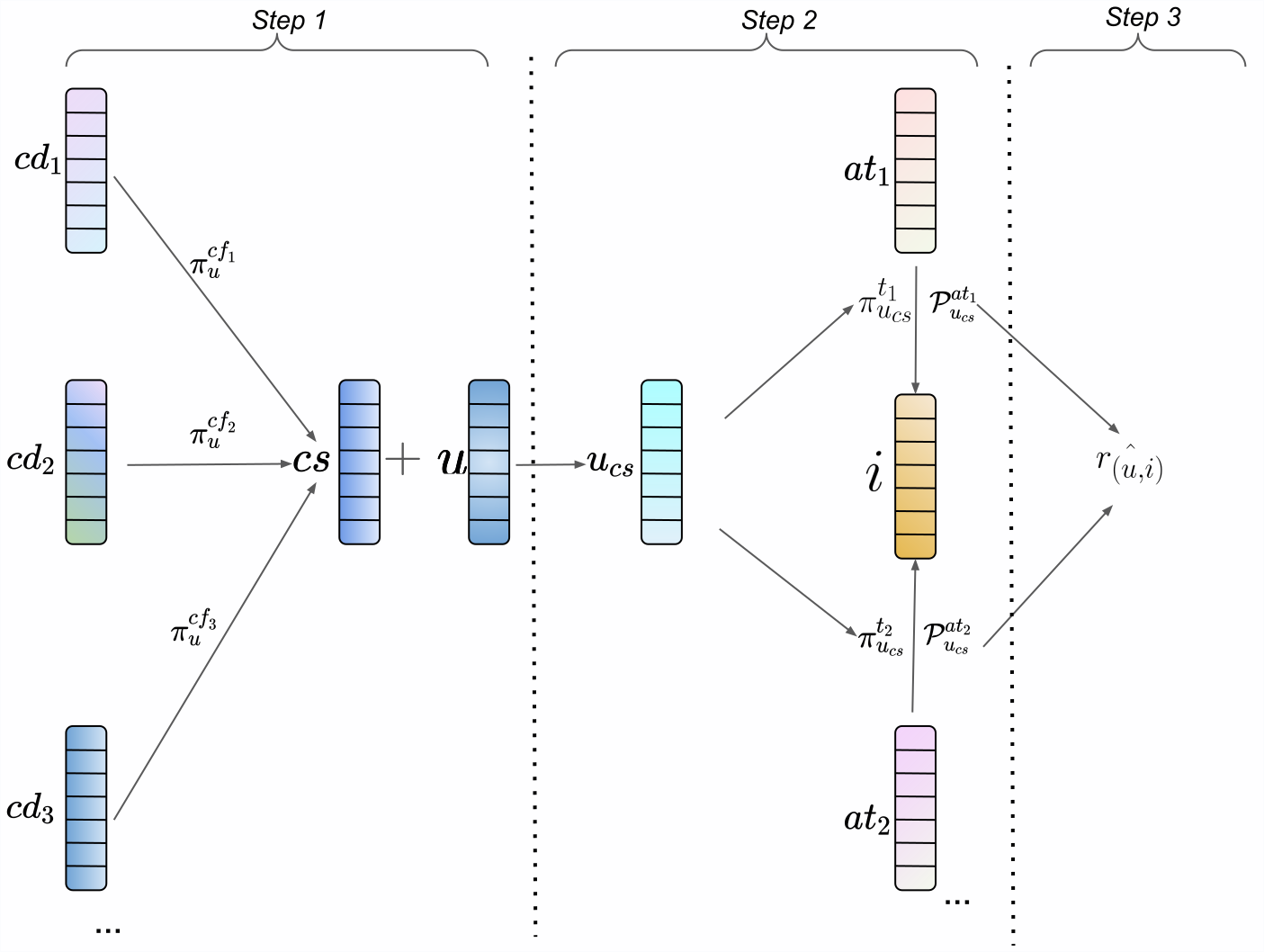}
\caption{Illustration of the framework of CA-FATA.}
\label{fig:framework}
\vspace{-0,4cm}
\end{figure}

In sequence, we compute the representation of the contextual situation $cs$ by summing up all the vectors representing contextual conditions multiplied by $\pi_u^{cf}$.

\begin{equation}\label{equa:compute_cs}
         \textbf{cs} = \sum\limits_{cd \in cs} \pi _u^{cf}\textbf{cd}
     \end{equation}
where $\textbf{cs}$ is the vector that denotes contextual situation $cs$. The next step is to aggregate the representation of contextual situation $cs$ with the representation of user $u$ to obtain a specific representation of user $u$ under the contextual situation $cs$. To avoid having an excessive number of parameters\footnote[2]{{Note that other aggregation methods such as concatenation are also possible but more parameters are induced. We leave this exploration for future work.}}, we sum $\textbf{u}$ and $\textbf{cs}$.  By aggregating information from a contextual situation $cs$ and a user $u$, each user $u$  gets a specific representation $\bm{u_{cs}}$ under a contextual situation $cs$. 

\begin{equation}\label{equa:user_final}
         \bm{u_{cs}} =\textbf{u} + \textbf{cs}
     \end{equation}

\textbf{Computing users' ratings towards features (Step 2):} {The feature types in this paper are similar to the relations in knowledge graphs}, which are directed graphs consist of \emph{entity-relation-entity} triplets \cite{hogan2021knowledge}. For instance, the triplet $(Harry Potter, hasDirector, Mike Newell)$ indicates that the movie \emph{Harry Potter} is directed by \emph{Mike Newell}. Here, $hasDirector$ is a relation in the knowledge graph that pertains to movies, and in this paper, it corresponds to the feature type $director$. We quantify the score between a feature type $t$ and a user $u$ as proposed in \cite{wang2019knowledge}:

\begin{equation} \label{equa:importance}
    \beta_{u_{cs}}^{t} = g(\bm{u_{cs}},\textbf{t})
\end{equation}
where $\bm{u_{cs}}$ is computed in the previous step, $\textbf{t}$ denotes the vector that represents feature type $t$. After calculating the score $\beta_{u_{cs}}^{t}$ of all feature importance to this user, {we normalize the score using Equation~\ref{equa:normalize_r} as described in \cite{velivckovic2017graph} to obtain the importance of each feature type}.

\begin{equation}\label{equa:normalize_r}
   \pi_{u_{cs}}^t = \frac{exp(LeakyReLU(\beta_{u_{cs}}^{t}))}{\sum_{t \in t_i}exp(LeakyReLU(\beta_{u_{cs}}^{t}))}
\end{equation}

To compute users' ratings towards features, we adopt the inner product again: $\mathcal{P}_{u_{cs}}^{at} = g(\bm{u_{cs}}, \textbf{at})$. According to Equations~\ref{equa:normalize}, \ref{equa:compute_cs}, and \ref{equa:user_final}, the representation of a user under one context differs from that under another context. As a result, the representation of user $u_{cs}$ is specific to each context, and the importance of feature type and user's rating towards features is also context-aware. 

\textbf{Aggregating ratings towards features (Step 3):} After calculating the importance of each feature type and users' ratings towards each feature, the next step is to compute the rating of user $u$ towards item $i$ by computing the contribution of each feature type $t$ using the following equation:

\begin{equation} \label{equa:contribution}
contr_t = \frac{\sum_{at \in at_i^t }\mathcal{P}_{u_{cs}}^{at}}{|at_i^t|}
\end{equation}
where $at_i^t$ denotes the features belong to type $t$ of item $i$. Finally, $u$'s rating towards $i$ under $cs$ is: 

\begin{equation} \label{equa:final}
\hat{r}_{(u,i)} = \sum_{t\in t_i}\pi_{u_{cs}}^t * contr_t
\end{equation} 
where $t_i$ denotes all the feature types of item $i$. It should be noted that the actual value of the user $u$'s rating for item $i$ is a real number between $-1$ and $1$, as defined in previous works such as \cite{rago2018argumentation,rago2021argumentative}. It is noteworthy that Equation~\ref{equa:final} is remarkably similar to Equation~\ref{equa:gams}, indicating that our model belongs to the family of generalized additive models. This similarity allows for easy identification of the contribution of each feature, fulfilling the first two goals: \textbf{{computing users' ratings towards items and determining the contribution of each feature}}.





In this section, we have presented how CA-FATA computes ratings. In the following section, we demonstrate how this process can be seamlessly integrated into an argumentation procedure.


\section{Argumentation scaffolding}\label{sec:argumentation}

\subsection{Argumentation setting}
CA-FATA first computes users' ratings towards features and then aggregates these ratings to predict ratings towards items. As a result, the features of items can be considered as arguments, and users' ratings towards features can be seen as the strength of these arguments. If CA-FATA predicts that a user's rating towards a feature of an item is high (under a target contextual situation $cs$), then this feature can be viewed as an argument that supports the recommendation of the item under $cs$. Conversely, if the predicted rating is low, the feature can be considered as an argument against the recommendation of the item under $cs$. In cases where features do not attack or support, a neutralizing relation is added, represented by $\mathcal{P}_{u_{cs}}^{at} = 0$. Therefore, we set $\sigma(at) = \mathcal{P}_{u_{cs}}^{at}$ and adopt TAF that contains support, attack, and neutralizing. Moreover, we set $\sigma(rec^i) =\hat{r}{(u,i)}$. When $\hat{r}{(u,i)} > 0$, the argument $rec^i$ is stronger if $\hat{r}_{(u,i)}$ is larger \footnote[3]{Note that if $\hat{r}_{(u,i)} < 0$ then the smaller $\hat{r}_{(u,i)}$ is, the stronger argument ``not recommending item $i$'' is}. 

Recall that the true rating $r_{(u,i)}$ is a real number between -1 and 1, then the co-domain of $\mathcal{P}_{u_{cs}}^{at}$ is also expected to be between -1 and 1. Therefore, when $\mathcal{P}_{u_{cs}}^{at} > 0$, then $\sigma(at) > 0$, indicating that $at$ is an argument that supports $rec^i$ \footnote[4]{Semantically, users $u$ prefers feature $at$}; when $\mathcal{P}_{u_{cs}}^{at} = 0$, then $\sigma(at) = 0$, indicating that $at$ is an argument that neutralizes $rec^i$; when $\mathcal{P}_{u_{cs}}^{at} < 0$, then $\sigma(at) < 0$, indicating that $at$ is an argument that attacks $rec^i$. Therefore, the TAF corresponds to a user-item interaction $(u,i)$ under $cs$ can be defined as follows: 

\begin{definition}\label{def:taf}
The TAF corresponding to $(u,i)$ under $cs$ is a 4-tuple: $<\mathcal{A}, \mathcal{R}^{-}, \mathcal{R}^{+}, \mathcal{R}^{0}>$ such that:
\begin{center}
$\bullet$ $\mathcal{R}^{-} = \{ (at,rec^i)|{\mathcal{P}_{u_{cs}}^{at}} < 0\}$;
\\
$\bullet$ $\mathcal{R}^{+} = \{(at,rec^i)|{\mathcal{P}_{u_{cs}}^{at}} > 0\}$;
\\
$\bullet$ $\mathcal{R}^{0} = \{(at,rec^i)|{\mathcal{P}_{u_{cs}}^{at}} = 0\}$.
\end{center}
\end{definition} 

According to the definition, ${\mathcal{P}_{u_{cs}}^{at}}$ determines the polarity of arguments: if ${\mathcal{P}_{u_{cs}}^{at}}$ is positive then the argument (feature) supports the recommendation of item $i$ to user $u$; if ${\mathcal{P}_{u_{cs}}^{at}}$ is negative then the argument (feature) attacks the recommendation of item $i$ to user $u$; if ${\mathcal{P}_{u_{cs}}^{at}}$ is 0 then the argument neutralizes the recommendation. Therefore, the third goal: \textbf{{determining the polarity of each argument (feature)}}, is fulfilled.

\subsection{Proofs}

We will now show that by setting $\sigma(at) = \mathcal{P}_{u_{cs}}^{at}$ and $\sigma(rec^i) =\hat{r}{(u,i)}$,  TAF corresponding to $(u,i)$ under $cs$ satisfies \emph{weak balance} and \emph{weak monotonicity}. Recall that \emph{weak balance} states that attacks (or supports) can be characterized as links between affecters and affectees in a way such that if one affecter is isolated as the only argument that affects the affectee, then the former reduces (increases, resp.) the latter's predicted rating with respect to the neutral point.

\begin{proposition}\label{proposition:balance}
Given the TAF corresponding to $(u,i)$ under $cs$ satisfies \emph{weak balance}.
\end{proposition}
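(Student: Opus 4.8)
The plan is to verify each of the three bullet points in the definition of \emph{weak balance} directly, by tracing how a single feature argument propagates through Equations~\ref{equa:contribution} and~\ref{equa:final} to fix the sign of $\sigma(rec^i) = \hat{r}_{(u,i)}$. Since the entire strength assignment is determined by the quantities $\mathcal{P}_{u_{cs}}^{at}$, the proof reduces to a case analysis on the sign of the lone active feature's predicted rating.

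First I would split into the three cases prescribed by the definition, according to whether the single argument $at$ lies in $\mathcal{R}^+(rec^i)$, $\mathcal{R}^-(rec^i)$, or $\mathcal{R}^0(rec^i)$ (with the other two relation sets empty). By Definition~\ref{def:taf}, membership in these sets is exactly equivalent to $\mathcal{P}_{u_{cs}}^{at} > 0$, $\mathcal{P}_{u_{cs}}^{at} < 0$, or $\mathcal{P}_{u_{cs}}^{at} = 0$ respectively; and because $\sigma(at) = \mathcal{P}_{u_{cs}}^{at}$, each case pins down the sign of $\sigma(at)$.

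The key observation I would then make is that when $at$ is the \emph{only} feature affecting $rec^i$, item $i$ has exactly one feature and hence exactly one feature type $t$, so $t_i = \{t\}$ and $at_i^t = \{at\}$. Consequently the contribution in Equation~\ref{equa:contribution} collapses to $contr_t = \mathcal{P}_{u_{cs}}^{at}$, and the softmax weight in Equation~\ref{equa:normalize_r} reduces to $\pi_{u_{cs}}^t = 1$, since its normalizing sum contains a single term. Substituting into Equation~\ref{equa:final} yields $\hat{r}_{(u,i)} = \mathcal{P}_{u_{cs}}^{at}$, so the sign of $\sigma(rec^i)$ is inherited from the sign of $\sigma(at)$, establishing all three bullets at once.

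The step that most deserves care — and the main (mild) obstacle — is justifying the collapse of the softmax weight and the reduction from ``single active feature'' to ``single feature type''. The essential fact actually needed is weaker than $\pi_{u_{cs}}^t = 1$: because $\exp(\cdot)$ is strictly positive, every softmax weight $\pi_{u_{cs}}^t$ is strictly positive, so the sign of $\hat{r}_{(u,i)}$ is dictated by $\mathcal{P}_{u_{cs}}^{at}$ regardless of the exact weight value. I would state this positivity explicitly to make the sign argument airtight, after which the three cases follow immediately and the proposition is proved.
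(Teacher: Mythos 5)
Your proof follows the same route as the paper's own: a three-case analysis on the sign of $\mathcal{P}_{u_{cs}}^{at}$ for the lone affecting feature, propagated through Equations~\ref{equa:contribution} and~\ref{equa:final} to determine the sign of $\sigma(rec^i)$. The only difference is that you explicitly justify the propagation step (single feature type, so $at_i^t=\{at\}$ and $\pi_{u_{cs}}^{t}=1$, plus the strict positivity of the softmax weights), which the paper leaves implicit behind ``according to Equations~\ref{equa:contribution} and~\ref{equa:final}''; your version is the more careful one and is correct.
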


\begin{proof}
By inspecting Equations~\ref{equa:contribution}, \ref{equa:final}, it can be observed that since users' ratings are transformed into $[-1,1]$, the co-domain of $\sigma$ is also $[-1,1]$. Case (i): if $\mathcal{R}^+(rec^i) = \{at\}$, $\mathcal{R}^-(rec^i) = \emptyset$ and $\mathcal{R}^0(rec^i) = \emptyset$ then $\sigma(at) > 0$, therefore, $\mathcal{P}_{u_{cs}}^{at} > 0$, according to Equations~\ref{equa:contribution} and ~\ref{equa:final}, $\hat{r}{(u,i)} > 0 $, indicating that $\sigma(rec^i) > 0$. Case (ii): if $\mathcal{R}^-(rec^i) = \{at\}$, $\mathcal{R}^+(rec^i) = \emptyset$ and $\mathcal{R}^0(rec^i) = \emptyset$ then $\sigma(at) < 0$, therefore, $\mathcal{P}_{u_{cs}}^{at} < 0$, according to Equations~\ref{equa:contribution} and ~\ref{equa:final}, $\hat{r}{(u,i)} < 0 $, indicating that $\sigma(rec^i) < 0$. Case (iii): if $\mathcal{R}^0(rec^i) = \{at\}$, $\mathcal{R}^+(rec^i) = \emptyset$ and $\mathcal{R}^-(rec^i) = \emptyset$ then $\sigma(at) = 0$, therefore, $\mathcal{P}_{u_{cs}}^{at} = 0$, according to Equations~\ref{equa:contribution} and ~\ref{equa:final}, $\hat{r}{(u,i)} = 0 $, indicating that $\sigma(rec^i) = 0$. 
\end{proof}

Similar to  \emph{weak balance}, \emph{weak monotonicity} characterizes attacks, supports, and neutralizes as links between arguments such that if the strength of one affecter is muted then the strength of its affectees increases, decreases, and remains unchanged, respectively. Therefore, \emph{weak monotonicity} highlights the positive/negative/neutral effect between arguments. In this way, \emph{weak monotonicity} reveals the positive, negative, or neutral effect of an argument.

\begin{proposition}\label{proposition:monotonicity}
Given the TAF corresponding to $(u,i)$ under $cs$, $\sigma(at) = \mathcal{P}_{u_{cs}}^{at}$ and $\sigma(i)$ satisfy the \emph{weak monotonicity}.
\end{proposition}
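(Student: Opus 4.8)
The plan is to exploit the fact that the strength of the recommendation argument is an \emph{additive} function of the feature strengths. Combining Equations~\ref{equa:contribution} and~\ref{equa:final}, we have $\sigma(rec^i) = \hat{r}_{(u,i)} = \sum_{t \in t_i} \frac{\pi_{u_{cs}}^t}{|at_i^t|} \sum_{at \in at_i^t} \mathcal{P}_{u_{cs}}^{at}$, so that $\sigma(rec^i)$ is a linear combination of the feature strengths $\sigma(at) = \mathcal{P}_{u_{cs}}^{at}$ with coefficients $\frac{\pi_{u_{cs}}^t}{|at_i^t|}$. Fixing an arbitrary affecter $x = at$ of $y = rec^i$ (say of type $t$), I would compare its affectee's strength $v = \sigma(rec^i)$ in the original TAF with the muted counterpart $v' = \sigma(rec^i)$ in the primed TAF, where $\sigma(x) = 0$.

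First I would isolate the single term of the sum that depends on $x$ and note that, by the premise of \emph{weak monotonicity}, every other affecter $z$ retains its strength across the two TAFs ($s = s'$); since $\sigma(rec^i)$ depends on the features only through the $\mathcal{P}_{u_{cs}}^{at}$, all remaining terms cancel in the difference $v - v'$. The only surviving term yields $v - v' = \frac{\pi_{u_{cs}}^t}{|at_i^t|}\,\sigma(x)$, equivalently $v' - v = -\frac{\pi_{u_{cs}}^t}{|at_i^t|}\,\sigma(x)$. Then I would verify that the coefficient $\frac{\pi_{u_{cs}}^t}{|at_i^t|}$ is strictly positive and unaffected by muting: $\pi_{u_{cs}}^t$ is a softmax output (Equation~\ref{equa:normalize_r}) and hence strictly positive, and it is a function only of the context-adapted user representation $\bm{u_{cs}}$ and the feature-type vector $\textbf{t}$ (Equation~\ref{equa:importance}), not of the individual feature ratings; the count $|at_i^t|$ is a structural constant. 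Thus muting $x$ changes neither coefficient, and the sign of $v' - v$ is exactly the opposite of the sign of $\sigma(x)$.

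The conclusion then follows by a case split on the polarity of $x$, read off from Definition~\ref{def:taf}. If $x \in \mathcal{R}^{-}(rec^i)$ then $\sigma(x) = \mathcal{P}_{u_{cs}}^{at} < 0$, whence $v' - v > 0$, i.e. $v' > v$; if $x \in \mathcal{R}^{+}(rec^i)$ then $\sigma(x) > 0$, whence $v' < v$; and if $x \in \mathcal{R}^{0}(rec^i)$ then $\sigma(x) = 0$, whence $v' = v$. These are precisely the three clauses of \emph{weak monotonicity}.

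The main obstacle I anticipate is not the computation but reconciling the muting operation with Definition~\ref{def:taf}. Under that definition the polarity of a feature is fixed by the sign of $\mathcal{P}_{u_{cs}}^{at}$, so setting $\sigma(x) = 0$ in the primed TAF would formally place $x$ in $\mathcal{R}^{0}$, clashing with the requirement $x \in \mathcal{R}^{-} \cap \mathcal{R}^{{-}^{\prime}}$ (resp. $\mathcal{R}^{+} \cap \mathcal{R}^{{+}^{\prime}}$) in the attack and support clauses. I would resolve this by following the two-TAF formulation of \emph{weak monotonicity}: the relational structure is inherited from the original TAF and treated as fixed, while $\sigma$ is the object being muted, so that $x$ remains a (muted) attacker or supporter. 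I would also make explicit that muting means zeroing the strength while leaving the feature in place, so that $|at_i^t|$ and the type structure---and therefore the weights $\pi_{u_{cs}}^t$---are preserved.
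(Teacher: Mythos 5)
Your proof is correct and follows essentially the same route as the paper's: mute the feature, case-split on its polarity via Definition~\ref{def:taf}, and read off the direction of change of $\hat{r}_{(u,i)}$ from the additive structure of Equations~\ref{equa:contribution} and~\ref{equa:final}. The paper simply asserts each conclusion ``according to Equations~\ref{equa:contribution} and~\ref{equa:final}'', whereas you make the linearity explicit by computing $v' - v = -\frac{\pi_{u_{cs}}^t}{|at_i^t|}\,\sigma(x)$ and checking that the coefficient is strictly positive and invariant under muting, and you also flag (and sensibly resolve) the tension between the muting operation and the sign-based relation assignment of Definition~\ref{def:taf}, a subtlety the paper's proof passes over.
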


\begin{proof}
\emph{Weak monotonicity} is formulated for two TAFs: from $<\mathcal{A}, \mathcal{R}^{-}, \mathcal{R}^{+}, \mathcal{R}^{0}>$ to $<\mathcal{A}^{\prime}, \mathcal{R}^{{-}^{\prime}}, \mathcal{R}^{{+}^{\prime}}, \mathcal{R}^{{0}^{\prime}}>$, after modifying certain arguments (e.g. muting certain features). Case (i): if $at \in \mathcal{R}^-(rec^i)$, then according to Definition~\ref{def:taf}, ${\mathcal{P}_{u_{cs}}^{at}} < 0$, when $at$ is muted then ${\mathcal{P}_{u_{cs}}^{at}}^{\prime} = 0$. According to Equations~\ref{equa:contribution} and ~\ref{equa:final}, ${\hat{r}_{(u,i)}} ^{\prime} > \hat{r}_{(u,i)} $, indicating that ${\sigma(rec^i)}^{\prime} > \sigma(rec^i)$. Case (ii): if $at \in \mathcal{R}^+(rec^i)$, then according to Definition~\ref{def:taf}, ${\mathcal{P}_{u_{cs}}^{at}} > 0$, when $at$ is muted then ${\mathcal{P}_{u_{cs}}^{at}}^{\prime} = 0$. According to Equations~\ref{equa:contribution} and ~\ref{equa:final}, ${\hat{r}_{(u,i)}} ^{\prime} < \hat{r}_{(u,i)} $, indicating that ${\sigma(rec^i)}^{\prime} < \sigma(rec^i)$. Case (iii): if $at \in \mathcal{R}^0(rec^i)$, then according to Definition~\ref{def:taf}, ${\mathcal{P}_{u_{cs}}^{at}} = 0$, when $at$ is muted then ${\mathcal{P}_{u_{cs}}^{at}}^{\prime} = 0$. According to Equations~\ref{equa:contribution} and ~\ref{equa:final}, ${\hat{r}_{(u,i)}} ^{\prime} = \hat{r}_{(u,i)} $, indicating that ${\sigma(rec^i)}^{\prime} = \sigma(rec^i)$.
\end{proof}

Based on Proposition~\ref{proposition:monotonicity}, the following corollary holds.
\begin{corallary}
If the predicted rating of on $at \in at_i$ is increased: ${\mathcal{P}_{u_{cs}}^{at}}^{\prime} > \mathcal{P}_{u_{cs}}^{at}$, then the predicted rating of item $i$ also increases: ${\hat{r}_{(u,i)}} ^{\prime} > \hat{r}_{(u,i)}$. Accordingly, if the predicted rating one $at \in at_i$ is decreased: ${\mathcal{P}_{u_{cs}}^{at}}^{\prime} < \mathcal{P}_{u_{cs}}^{at}$, then the predicted rating of item $i$ also decreases: ${\hat{r}_{(u,i)}} ^{\prime} < \hat{r}_{(u,i)}$.
\end{corallary}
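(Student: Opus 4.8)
The plan is to exploit the fact that, through Equations~\ref{equa:contribution} and~\ref{equa:final}, the predicted item rating $\hat{r}_{(u,i)}$ is an additive and \emph{linear} function of each individual feature rating $\mathcal{P}_{u_{cs}}^{at}$, with a strictly positive coefficient. This is precisely the structure that already drove the proof of Proposition~\ref{proposition:monotonicity}; the corollary only replaces ``muting to the neutral point'' by an arbitrary increase or decrease, so I would reuse the same mechanism.

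First I would fix the feature type $t_0$ to which $at$ belongs and isolate the dependence of $\hat{r}_{(u,i)}$ on $\mathcal{P}_{u_{cs}}^{at}$. From Equation~\ref{equa:contribution}, the average $contr_{t_0}$ decomposes into $\mathcal{P}_{u_{cs}}^{at}/|at_i^{t_0}|$ plus a sum over the other features of type $t_0$ that does not mention $at$. Substituting into Equation~\ref{equa:final} and collecting every term independent of $at$ into a constant $K$, I obtain
\[
\hat{r}_{(u,i)} = \frac{\pi_{u_{cs}}^{t_0}}{|at_i^{t_0}|}\,\mathcal{P}_{u_{cs}}^{at} + K .
\]

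Next I would verify that the slope $\pi_{u_{cs}}^{t_0}/|at_i^{t_0}|$ is strictly positive and unchanged by the perturbation. Strict positivity follows because $\pi_{u_{cs}}^{t_0}$ is a softmax output (Equation~\ref{equa:normalize_r}), hence positive, and $|at_i^{t_0}| \geq 1$ since $at \in at_i^{t_0}$. Moreover, as in the two-TAF comparison underlying \emph{weak monotonicity}, the weights $\pi_{u_{cs}}^{t}$ depend only on $\bm{u_{cs}}$ and the type vectors (Equations~\ref{equa:importance} and~\ref{equa:normalize_r}), not on the feature ratings, so perturbing $\mathcal{P}_{u_{cs}}^{at}$ alone leaves both the slope and $K$ intact. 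Consequently ${\mathcal{P}_{u_{cs}}^{at}}^{\prime} > \mathcal{P}_{u_{cs}}^{at}$ forces ${\hat{r}_{(u,i)}}^{\prime} > \hat{r}_{(u,i)}$, and symmetrically a decrease in the feature rating strictly decreases the item rating.

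The only real subtlety---and hence the main obstacle---is justifying that the importance weights and the remaining features' ratings are held fixed while a single $\mathcal{P}_{u_{cs}}^{at}$ varies. This is exactly the implicit hypothesis already invoked in Proposition~\ref{proposition:monotonicity}, where only the targeted affecter changes; once it is granted, the claim is immediate from the strictly positive slope, and the muting case of Proposition~\ref{proposition:monotonicity} is recovered as the special instance $\mathcal{P}_{u_{cs}}^{at}\to 0$.
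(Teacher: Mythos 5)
Your proposal is correct and follows the same route the paper intends: its proof is just the one-line instruction to inspect Equations~\ref{equa:contribution} and~\ref{equa:final} together with Proposition~\ref{proposition:monotonicity}, and your explicit decomposition $\hat{r}_{(u,i)} = \bigl(\pi_{u_{cs}}^{t_0}/|at_i^{t_0}|\bigr)\,\mathcal{P}_{u_{cs}}^{at} + K$ with a strictly positive, perturbation-invariant slope is precisely what that inspection amounts to. You are in fact more careful than the paper in flagging the implicit hypothesis that the importance weights and the other features' ratings stay fixed while a single $\mathcal{P}_{u_{cs}}^{at}$ varies.
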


\begin{proof}
    By inspecting Equations~~\ref{equa:contribution}, \ref{equa:final}, Proposition~\ref{proposition:monotonicity}.
\end{proof}

Propositions~\ref{proposition:balance} and ~\ref{proposition:monotonicity} ensure that the forth objective: \textbf{{designing strength function that satisfies\emph{weak balance} and \emph{weak monotonicity}}}, is fulfilled. The TAF corresponding to the CA-FATA shows how each feature influences the final rating prediction (recommendation) in an argumentative manner: each feature may support, attack, or neutralize the recommendation of the item, and the user's rating towards each feature determines its polarity. That being said, CA-TAFA is decomposable \citep{lipton2018mythos}, as each computation has explicit meanings that can also be articulated as intelligible \citep{lou2012intelligible}.

As a running example, Figure~\ref{fig:explanation} presents the TAF for a user-item interaction under the contextual situation $cs$ (extracted from Figure~\ref{fig:framework}). In this TAF, each feature of the item represents an argument. The user's rating towards each feature determines the strength and polarity of the argument, thereby reflecting the user's preference. The strength of argument 1 is $+0.52$, indicating that the user likes feature 1 (e.g., a movie director or actor), and this feature supports the recommendation of the item to the user. The strength of argument 2 is $-0.11$, indicating that the user does not like feature 2 and that this feature attacks the recommendation of the item to the user. Finally, the strength of argument 3 is $0$, indicating that this feature does not influence the user's rating. Note that, according to the three steps in Section~\ref{sec:our_proposition}, the prediction score could differ under different contexts, even for the same user and item. Therefore, the corresponding TAFs would also differ.

\subsection{Explanation scenarios}
\begin{table*}[t]
\caption{Three explanation templates for user-item interaction under contextual situation $cs = (cd_1, cd_2, cd_3, \dots)$, $SR$ denotes ``strong recommendation'', $WR$ denotes ``weak recommendation'', $NR$ ``not recommended''.}
\centering
\begin{tabular}{|c|c|l|}
\hline
Scenario & Content & Example \\ \hline
\multirow{3}{*}{$SR$} & $at_1 = \mathop{\arg\max}\limits_{at \in at_i}\mathcal{P}_{u_{cs}}^{at}$ & When $cd$, we recommend you this item \\
 & $at_2 = \mathop{\arg\max}\limits_{at \in at_i \backslash at_1}\mathcal{P}_{u_{cs}}^{at}$  &because you like $at_1$ and $at_2$. \\
&$cd = \mathop{\arg\max}\limits_{cd \in cs}\pi_u^{cf}$&
\\ \hline
 
\multirow{3}{*}{$WR$} & $at_1 = \mathop{\arg\max}\limits_{at \in at_i}\mathcal{P}_{u_{cs}}^{at}$ & When $cd$, we recommend you this item \\ 
& $at_2 = \mathop{\arg\min}\limits_{at \in at_i \backslash at_1}\mathcal{P}_{u_{cs}}^{at}$  & because you like $at_1$ although you  \\
&$cd = \mathop{\arg\max}\limits_{cd \in cs}\pi_u^{cf}$& dislike $at_2$.
\\ \hline

\multirow{3}{*}{$NR$} & $at_1 = \mathop{\arg\min}\limits_{at \in at_i}\mathcal{P}_{u_{cs}}^{at}$ & When $cd$, we do not recommend you \\& $at_2 = \mathop{\arg\min}\limits_{at \in at_i \backslash at_1}\mathcal{P}_{u_{cs}}^{at}$ &this item because you dislike $at_1$ and $at_2$. \\ &$cd = \mathop{\arg\max}\limits_{cd \in cs}\pi_u^{cf}$&\\ \hline
\end{tabular}
\label{tab:scenario}
\vspace{-0.4cm}
\end{table*}

After conducting the above analyses, we propose three explanation templates in Table~\ref{tab:scenario}, similar to the three explanation types in \cite{rago2021argumentative}, but with the inclusion of users' contexts. In each scenario, we select the most influential contextual condition (as determined by Equation~\ref{equa:score_context}). For ``strong recommendation'', we propose selecting the two strongest arguments (aka. features) that support the recommendation of the item. In ``weak recommendation'', we propose selecting the strongest argument that supports the recommendation of the item and the strongest one that attacks the recommendation of the item. In ``not recommended'', we propose selecting the two strongest arguments (aka. features) that attack the recommendation of the item. Each template includes contextual information along with the corresponding arguments that either support or attack the recommendation of the item. {In summary, CA-FATA is a versatile model that can be used to explain both the reasons for recommended items as well as the reasons why some items should not be recommended. Additionally, users have the flexibility to define their own templates according to their specific needs.}

\section{Experiments}\label{sec:experiments}

In this section, we conduct experiments on two real-world datasets to address the following research questions: \textbf{RQ1}, can CA-FATA achieve competitive performance compared to baseline methods? What are the advantages of CA-FATA compared to baseline methods? \textbf{RQ2}, how does context influence the performance of CA-FATA? \textbf{RQ3}, how does the importance of feature type affect the performance of CA-FATA?
\subsection{Datasets and experiment setting}
\subsubsection{Datasets}

We have conducted experiments on the following real-world datasets:

\textbf{Frapp\'e:} This dataset is collected by \cite{baltrunas2015frappe}. This dataset originated from Frapp\'e, a context-aware app recommender. There are 96 303 logs of usage from 957 users under different contextual situations, 4 082 apps are included in the dataset. Following \cite{unger2020context}, we apply log transformation to the number of interactions. As a result, the number of interactions is scaled to $0-4.46$. Each contextual situation is composed of 7 contextual conditions: ``daytime'', ``weekday'', ``isweekend'', ``homework'', ``weather'', ``country'', ``city'' and each of them corresponds to a contextual factor. The features of each app include: ``category'', ``number of downloads'', ``language``, ``price'', ``average rating (given by other users)''.

\begin{table}[t]
\centering
\caption{Basic statistics of the two datasets}
\label{tab:statistics}
\begin{tabular}{@{}lcc@{}}
\hline
Dataset  & Frapp\'e & Yelp \\ \hline
\#users  & 585 & 9 976 \\
\#items & 3 923 & 52 298 \\
\#interactions & 94 716  & 904 648 \\
sparsity (users $\times$ items) & 94.47\%& 99.84\%\\
\#Contextual factors & 7 & 8 \\
\#Feature types & 5& 3  \\
Scale & 0-4.46 & 1-5 
\\ \hline
\end{tabular}
\vspace{-0,4cm}
\end{table}

\textbf{Yelp\footnote[5]{\url{https://www.yelp.com/dataset}}:} This dataset contains users' reviews on bars and restaurants in metropolitan areas in the USA and Canada. Consistent with previous studies by \cite{zhou2020s3,geng2022recommendation}, we use the records between January 1, 2019 to December 31, 2019, which contains 904 648 observations. As contextual factors, we have derived ``month'', ``day of week'', ``isweekend'', ``time of the day'' from the timestamp of the records; ``alone\_or\_companion'' is extracted from the reviews of users, ``state'' and ``city'' are provided. The features available include: ``stars'' (average ratings given by other users), ``review count '' ( the number of reviews received), and ``item type''.

For the two datasets, we have adopted the 10-core setting, following \cite{wang2019neural}, to ensure data quality. This means that only users with at least 10 interactions are kept. Detailed statistics on the pre-processed datasets are presented in Table~\ref{tab:statistics}. 

\subsubsection{Baselines and evaluation setting}

We compare the following baselines: (i) \textbf{MF} \cite{koren2009matrix}: This classic collaborative filtering method only considers user-item pairs and computes the inner product of the vectors representing users and items to make predictions, without taking into account additional information such as users' contexts and item features. (ii) \textbf{CAMF-C} \cite{baltrunas2011matrix}: An extension of \textbf{MF} that incorporates the global influence of contexts on ratings. (iii) \textbf{FM} \cite{rendle2010factorization}: A strong baseline that models the second-order interactions between all information related to user-item interactions, including users' characteristics, users' context, and item features. (iv) \textbf{NeuMF} \cite{he2017neural1}: A method that combines matrix factorization and MLP (Multi-Layer Perceptron) to model the latent features of users and items. (v) \textbf{ECAM-NeuMF} \cite{unger2020context}: An extension of \textbf{NeuMF} that integrates contextual information. Note that the authors in \cite{unger2020context} do not release the implementation detail, for the NeuMF part, we empirically set the MLP factor size as 8, the sizes of the hidden layer as $(16,8,4)$, the GMF (Generalized Matrix Factorization) factor size as 16. This setting also applies to pure NeuMF. (vii) \textbf{A-I} \cite{rago2018argumentation, rago2021argumentative,rago2020argumentation}: An argumentation-based framework that computes users' ratings towards features, which are then aggregated to obtain the ratings towards items. Following \cite{rago2018argumentation}\footnote[6]{For detail please refer to \url{https://github.com/CLArg-group/KR2020-Aspect-Item-Recommender-System}.}, we set the ``collaborative factor'' as 0.8, 20 most similar users are selected, and  all the feature importance is set as 0.1. 

In these two datasets, users give explicit ratings towards items, therefore the squared loss is utilized to optimize parameters of CA-FATA:
\begin{equation}
    L = \sum_{(u,i,cs) \in \mathcal{T}}{(\hat{r}_{(u,i,cs)} - r_{(u,i,cs)})}^2 + \lambda {\left \| \Theta \right\|}_2^2
\end{equation} 
where $\mathcal{T}$ is the training set, $\hat{r}_{(u,i,cs)}$ is the predicted rating and $r_{(u,i,cs)}$ denotes the actual rating, $\lambda$ denotes the regularization parameter to reduce over-fitting, $\Theta$ denotes the parameters of CA-FATA. We implement CA-FATA using Pytorch\footnote[7]{Access to source code is provided in \url{https://anonymous.4open.science/r/CA_FATA-F0B3}} and we optimize the parameters using \emph{mini-batch Adam}. The testing platform is Tesla P100-PCIE, 16GB memory in CPU. The hyper-parameters are tuned through a grid search: the learning rate is tuned on $[10^{-5}, 10^{-4}, 10^{-3}, 10^{-2}, 10^{-1}]$; the batch size is tuned on $[128,256,512,1024,2048,4096]$; regularization parameter is tuned on range $[5 * 10^{-5}, 10^{-4}, 5 * 10^{-3}, 10^{-3}, 10^{-2}]$. The embedding size is tuned on $[8,16,32,64,128,256]$. Root Mean Squared Error (RMSE), and Mean Absolute Error (MAE) are selected as the primary evaluation metrics. We follow the convention established in \cite{unger2020context,li2020interpretable,chen2022tinykg} by splitting the datasets into a training set, a test set, and a validation set, with a ratio of $8:1:1$. 



\subsection{Rating prediction (RQ1)}
\begin{table*}[t]
\centering
\caption{Comparison between CA-FATA and baselines on RMSE and MAE, the second best are underlined. FATA is basically a variant of CA-FATA, the difference between CA-FATA and FATA is that FATA does not consider users' contexts and is actually the $A^3R$ \cite{zhong20223} model. The version with ``AVG'' means that the importance of each feature type is set the same for all users.}
\label{tab:results}
\begin{tabular}{ccllll}
\toprule
\multicolumn{2}{c}{\multirow{2}{*}{Model}} & \multicolumn{2}{c}{Yelp} & \multicolumn{2}{c}{Frapp\'e} \\
\cmidrule(l){3-6} 
\multicolumn{2}{c}{}  & RMSE & MAE & RMSE  & MAE \\ \toprule
\multirow{2}{*}{Context-free}  & MF  & 1.1809  & 0.9446 & 0.8761 & 0.6470 \\

 &NeuMF &1.1710&0.8815&0.6841 & 0.5207
 \\ \toprule
\multirow{5}{*}{Context-aware} & FM  & 1.1703 & 0.9412& 0.7067 & 0.5796\\
& CAMF-C & 1.1693  & 0.9241 & 0.7283 & 0.5727 \\
& LCM & 1.1687  & 0.9294 & 0.6952& 0.5396 \\
 & ECAM-NeuMF  & 1.1098  & \underline{0.8636} & 0.5599 &   0.4273\\
 \toprule
Argumentation-based  & A-I & 1.3978  & 1.1205& 1.1711 & 0.9848 \\\toprule
\multirow{4}{*}{Our propositions} & FATA & 1.1434 & 0.9059 & 0.6950 & 0.5439 \\
&AVG-FATA &1.1611&0.9314&0.6970&0.5461 \\
& \textbf{CA-FATA}  &\textbf{1.1033}&\textbf{0.8519} & \textbf{0.5154} & \textbf{0.3910} \\
&AVG-CA-FATA&\underline{1.1035}&0.8637&\underline{0.5254}&\underline{0.4025}\\

 \bottomrule
\end{tabular}
\vspace{-0,4cm}
\end{table*}

Table~\ref{tab:results} presents the results of the rating prediction experiments. We observe that CA-FATA performs better than all baselines on both the Yelp dataset and the Frapp\'e dataset, indicating its superiority in handling complex contextual information. The following are some specific observations:

\textbf{On the dataset Yelp:} (i) CA-FATA outperforms MF in terms of recommendation accuracy because it considers users' contexts and models their preferences towards features. In contrast, MF only explores user-item interactions. NeuMF, which combines matrix factorization and MLP to capture the latent features of users and items, performs worse than CA-FATA. Furthermore, every computation step in CA-FATA has explicit semantics, which contributes to better explainability of recommendations. CA-FATA further considers the influence of contexts and item features, which will be further discussed in Section~\ref{sec:Abalation}. (ii) Among the context-aware models, CA-FATA outperforms CAMF-C, FM, LCM, ECAM-NeuMF. Moreover, CA-FATA can explain recommendations in an argumentative way, as discussed in Section~\ref{sec:argumentation}. (iii) CA-FATA significantly improves prediction accuracy compared to A-I, with an RMSE improvement of $21.06\%$ and an MAE improvement of $23.97\%$. This result can be interpreted in two ways: (1) CA-FATA learns the importance of feature types to users under different contexts, while A-I assigns the same importance to feature types for all users and does not consider users' contexts; (2) Selecting the number of the $k$ most similar users in A-I is not straightforward because a small $k$ may ignore important users, while a large $k$ may introduce noise from less similar users. 


\textbf{On the dataset Frapp\'e:} (i) CA-FATA performs exceptionally well on this dataset, outperforming all baselines. Compared to context-free methods, CA-FATA improves upon MF by $40.86\%$ and $39.56\%$ on RMSE and MAE, respectively. Compared to NeuMF, CA-FATA achieves reductions in RMSE and MAE by $24.28\%$ and $24.03\%$, respectively. (ii) Among the context-aware baselines, CA-FATA outperforms FM, CAMF-C, LCM, and ECAM-NeuMF, demonstrating its ability to model users' preferences under different contexts. Another advantage of CA-FATA is the ability to provide argumentative explanations, which is not possible for these baselines. (iii) Compared to A-I, on Yelp, CA-FATA achieves a significant reduction in RMSE and MAE by $55.99\%$ and $60.29\%$, respectively. (iv) A horizontal comparison of Frappé and Yelp datasets shows that CA-FATA performs better on Frappé than on Yelp. We attribute this difference to the sparsity of the dataset, as Yelp is still highly sparse even after applying the 10-core setting, with a sparsity of $99.84\%$ (see Table~\ref{tab:statistics}), while Frappé has a sparsity of $94.47\%$.

\begin{table*}[t]
\centering
\caption{Comparison between CA-FATA and baselines on precision, recall and f1 score, the second best are underlined.}
\label{tab:results_hr}
\begin{tabular}{ccccccc}
\hline
\multirow{2}{*}{Model} & \multicolumn{3}{c}{Yelp} & \multicolumn{3}{c}{Frapp\'e} \\ 
\cmidrule(r){2-4} \cmidrule(r){5-7}

  &Precision& Recall & f1 & Precision& Recall & f1 \\ \hline
     MF  & 0.8346& 0.8907 &0.8617 & 0.6889 &0.7713&0.7278\\

 NeuMF & 0.8962 &0.9522&0.9234  &0.6882 &0.8151 &0.7463
 \\ \hline
 FM  &0.8654  &0.9028  &0.8837 &0.6996 &0.7896&0.7337\\
CAMF-C &0.8701  & 0.9176 &0.8932& 0.6913 &0.7804&0.7331 \\
LCM & 0.8693 & 0.9207 &  0.8942&0.6894&0.7769& 0.7305 \\
 ECAM-NeuMF  &  0.8849&\textbf{0.9827}& \textbf{0.9313}  &0.7774  &0.8124& 0.7945\\ \hline
 A-I & 0.8249 & 0.8828 &0.8528&  0.6731&0.7658&0.7164\\\hline
 FATA & \underline{0.8982} & 0.9361 & 0.9167 &0.6752&0.8139&0.7381\\
AVG-FATA &  \textbf{0.8998}&0.9277  & 0.9135 &0.6748&0.8141&0.7380\\
 \textbf{CA-FATA}  & 0.8888 &\underline{0.9754}  & \underline{0.9303} & \textbf{0.8043}&\textbf{0.8336}&\textbf{0.8182}\\
AVG-CA-FATA& {0.8894} &  0.9748 &0.9302 &\underline{0.7982}&\underline{0.8308}&\underline{0.8142}\\
 \hline
\end{tabular}
\vspace{-0,4cm}
\end{table*}

Due to the subjective nature of ratings, different users might assign different ratings to the same item, even if they both enjoyed it. For instance, one user might give a rating of 5 stars while another might give it 4 or 4.5 stars. To account for such variations in subjective ratings, we follow the approach proposed by Rago et al. \citep{rago2021argumentative} and convert ratings to a binary scale. In our case, ratings greater than or equal to 3 in the Yelp dataset are considered positive, while ratings less than 3 are considered negative. We set the threshold to 0.9030, which is the average rating across all users. We report global precision, recall, and F1 score based on the binary scale, and present the results in Table~\ref{tab:results_hr}. The results in Table~\ref{tab:results_hr} confirm the trends observed in Table~\ref{tab:results}, showing that CA-FATA and its variants in most cases outperform the baselines. Furthermore, the advantage of using CA-FATA over other methods extends beyond its superior performance. CA-FATA provides transparency in the prediction process, allowing for the tracing of the feature attribution procedure. This added interpretability is a crucial advantage compared to other baseline models. 

To summarize, the advantages of CA-FATA are as follows: (i) it achieves competitive performance compared to both context-free and context-aware baselines. These baselines use factorization-based methods such as MF, FM, and CAMF-C, and some combine neural networks like NeuMF and ECAM-NeuMF, which makes them difficult to interpret. {On the other hand, CA-FATA provides explicit semantics for each computation and generates argumentative explanations (see Table~\ref{tab:scenario} for some examples)}; (ii) compared to the argumentation-based method A-I, CA-FATA significantly improves prediction accuracy and generates context-aware explanations.




\subsection{Abalation study (RQ2 and RQ3)} \label{sec:Abalation}

 \begin{figure*}[t]
\centering  
\subfigure[\footnotesize{Cluster of users according to importance of each contextual factor}]{
\label{Fig.sub.7}
\includegraphics[width=0.28\textwidth]{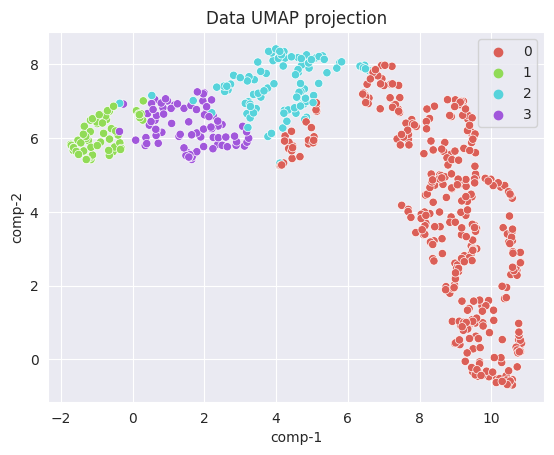}}
\subfigure[\footnotesize{Average importance of each contextual factor of users from cluster 0}]{
\label{Fig.sub.0}
\includegraphics[width=0.28\textwidth]{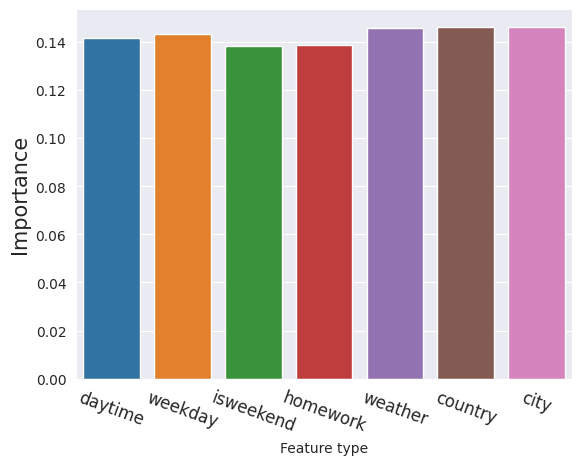}}
\subfigure[\footnotesize{Average importance of each contextual factor of users from cluster 1}]{
\label{Fig.sub.1}
\includegraphics[width=0.28\textwidth]{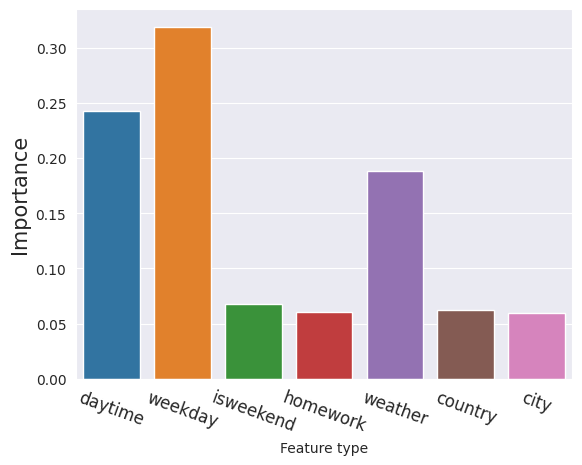}}
\subfigure[\footnotesize{Average importance of each contextual factor of users from cluster 2}]{
\label{Fig.sub.2}
\includegraphics[width=0.28\textwidth]{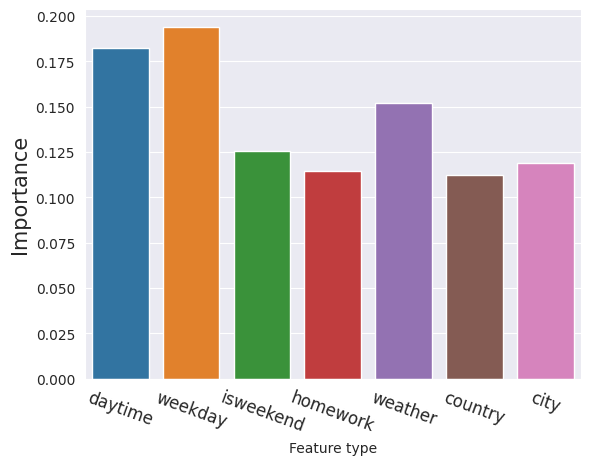}}
\subfigure[\footnotesize{Average importance of each contextual factor of users from cluster 3}]{
\label{Fig.sub.3}
\includegraphics[width=0.28\textwidth]{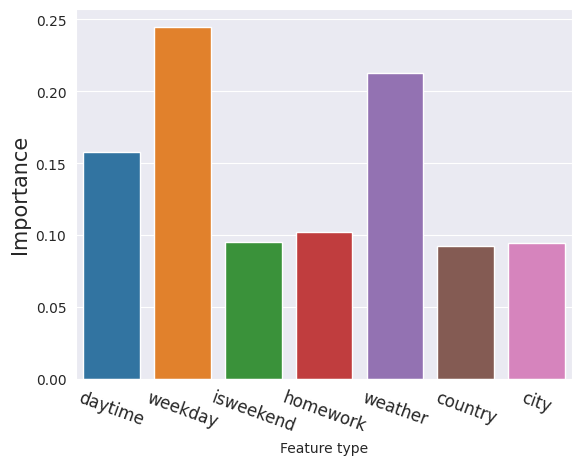}}
\caption{A case study on Frapp\'e that shows the clustering of users according to the contextual factor importance learned by $CA-FATA$. The histogram shows the average importance of each contextual factor in the cluster.}
\label{Fig.main}
\vspace{-0,4cm}
\end{figure*}

In order to investigate the impact of contextual factors on the performance of CA-FATA, we propose an alternative approach called FATA, which neglects user contexts and computes ratings using Equations~\ref{equa:importance},\ref{equa:normalize_r}, and~\ref{equa:final}, identical to the $A^3R$ model proposed in \cite{zhong20223}. Results presented in Tables~\ref{tab:results} and~\ref{tab:results_hr} (\textbf{refer to rows 11 and 13}) demonstrate that CA-FATA outperforms FATA, indicating that incorporating user contexts enables more nuanced modeling of user preferences and improves prediction accuracy. This conclusion is reinforced by the superior performance of CAMF-C over MF and ECAM-NeuMF over NeuMF.

To investigate the influence of feature type importance on our proposed model's performance, we introduce AVG-CA-FATA and AVG-FATA for CA-FATA and FATA, respectively (\textbf{refer to rows 12 and 14 in Tables~\ref{tab:results} and~\ref{tab:results_hr}}). In these models, the importance of each feature type is uniformly set for all users. For instance, in Frappé, where there are five feature types, the importance is set to 0.2 for all users, while in Yelp, where there are three feature types, the importance is set to 0.33. Results demonstrate that AVG-CA-FATA performs worse than CA-FATA, as does AVG-FATA when compared to FATA. Furthermore, comparisons between FATA, AVG-FATA, CA-FATA, and AVG-CA-FATA confirm the advantages of incorporating user contexts and modeling feature type importance across users.

To further visualize the impact of context, we represent each user by their contextual factor importance, computed using Equation~\ref{equa:normalize}. We use the Frappé dataset as an example, where a vector of seven dimensions represents each user: $(\pi_u^{daytime},\pi_u^{weekday},\pi_u^{isweekend},\pi_u^{homework},\pi_u^{weather},\pi_u^{country},\pi_u^{city})$. We apply K-means clustering for its simplicity and effectiveness \citep{velmurugan2010computational}, and find that four clusters best fit the dataset, as illustrated in Figure~\ref{Fig.sub.0}. We then use UMAP \citep{mcinnes2018umap} to visualize the clustering results. Note that other dimension reduction techniques could also be used, {but we choose UMAP because it can preserve the underlying information and general structure of the data.} The average importance of each contextual factor in the seven clusters is shown in Figures~\ref{Fig.sub.0}, ~\ref{Fig.sub.1}, ~\ref{Fig.sub.2}, ~\ref{Fig.sub.3}, revealing that users pay different levels of attention to contextual factors in the different clusters. Note that the same visualization applies to the Yelp data, due to limited space, we have omitted the visualization on the Yelp dataset.

{In this section, we conducted experiments on two real-world datasets to evaluate the performance of our proposed CA-FATA model. The results demonstrate that CA-FATA outperforms some neural network-based models in terms of recommendation accuracy. Additionally, CA-FATA has the advantage of providing explanations for recommendations through arguments, which is not possible with these neural network-based models. Furthermore, our model demonstrates significant improvement compared to existing argumentation-based models. The ablation study conducted on our proposed model highlights the benefits of leveraging users' contexts and modeling the importance of feature types.}

\section{Conclusions and perspectives}\label{sec:conclu}
{In light of the interpretability challenges associated with existing feature attribution methods,}  we present a novel feature attribution framework called \textbf{C}ontext-\textbf{A}ware \textbf{F}eature \textbf{A}ttribution \textbf{T}hrough \textbf{A}rgumentation (CA-FATA). CA-FATA is a feature attribution framework that treats features as arguments that can either support, attack, or neutralize a prediction using argumentation procedures. This approach provides explicit semantics to each step and allows for easy incorporation of user context to generate context-aware recommendations and explanations. The argumentation scaffolding in CA-FATA is designed to satisfy two important properties: \emph{weak balance} and \emph{weak monotonicity}, {which highlights how features influence a prediction}. These properties help identify important features and study how they influence the prediction task. We also introduce three explanation scenarios - strong recommendation, weak recommendation, and not recommended, {which can be used to explain why items have been recommended or not recommended.} Our experimental results show that CA-FATA outperforms several strong baselines regarding RMSE, MAE, precision, recall, and f1 score, highlighting its ability to provide both accuracy and explainability. In the future, we plan to explore the applicability of CA-FATA in other domains to verify its generalizability. {To compute the score of contextual factor and feature type, we adopted the inner product (in Equations~\ref{equa:importance} and ~\ref{equa:normalize_r}). We plan to explore other functions for this purpose.} Additionally, we intend to conduct user studies to compare the effectiveness of context-aware and argumentative explanations with other explanation methods.

\bibliographystyle{ACM-Reference-Format}
\bibliography{article}


\end{document}